\documentclass[sigconf]{acmart}
\usepackage{multirow}
\usepackage{stfloats}
\usepackage{graphicx}
\usepackage{subfig}
\usepackage{amsmath}
\usepackage{algorithm}
\usepackage{algorithmic}
\usepackage{amsthm}

\newtheorem{theorem}{Theorem}

\theoremstyle{definition}

\AtBeginDocument{%
  }

\begin{document}

\title{V-ABFT: Variance-Based Adaptive Threshold for Fault-Tolerant Matrix Multiplication in Mixed-Precision Deep Learning}

\author{Yiheng Gao}
\affiliation{
  \institution{Peking University}
  \city{Beijing}
  \country{China}
}

\author{Qin Hua}
\affiliation{
  \institution{The Chinese University of Hong Kong, Shenzhen}
  \city{Shenzhen}
  \country{China}
}

\author{Zizhong Chen}
\authornote{Corresponding author. Email: chenzizhong@cuhk.edu.cn.}
\affiliation{
  \institution{The Chinese University of Hong Kong, Shenzhen}
  \city{Shenzhen}
  \country{China}
}

\begin{abstract}
Algorithm-Based Fault Tolerance (ABFT) is widely adopted to detect silent data corruptions (SDCs) in matrix multiplication, a cornerstone operation in deep learning systems. However, existing threshold determination methods face critical challenges: analytical bounds are overly conservative, while probabilistic approaches like A-ABFT yield thresholds $160$--$4200\times$ larger than actual rounding errors. We present V-ABFT, a variance-based adaptive threshold algorithm that achieves tighter error bounds by directly modeling the verification difference. By leveraging statistical variance estimation, V-ABFT reduces the threshold-to-actual-error ratio to approximately $7$--$20\times$ for FP32/FP64 and $48$--$158\times$ for BF16, representing a \textbf{6--48$\times$ improvement} over A-ABFT while maintaining zero false positive rate across BF16, FP16, FP32, and FP64 precisions. Furthermore, we demonstrate that for fused-kernel ABFT implementations that verify before output quantization, low-precision GEMM can use FP32-level thresholds ($e_{\max} \approx 10^{-6}$), enabling \textbf{$\sim$1000$\times$ finer detection granularity} compared to offline verification with low-precision output ($e_{\max} \approx 10^{-3}$). We reproduce A-ABFT's experimental setup and validate our implementation against the original paper's results. Our method requires only $O(n)$ complexity using max/min/mean statistics, compared to A-ABFT's $O(pn)$ for finding $p$ largest values. Extensive experiments on synthetic data and real model weights (LLaMA-7B, GPT-2, ViT) demonstrate V-ABFT's effectiveness across diverse distributions. V-ABFT is platform-agnostic and has been integrated into fault-tolerant GEMM implementations on both NPUs and GPUs.
\end{abstract}

\keywords{Fault Tolerance, ABFT, Matrix Multiplication, Deep Learning, Soft Error Detection, Mixed Precision}

\maketitle

\section{Introduction}

The explosive growth of large language models (LLMs) has placed unprecedented demands on computational infrastructure reliability. Training models like GPT-4 involves approximately $2.15 \times 10^{25}$ floating-point operations~\cite{gpt4-cost}, with the vast majority being matrix multiplications in attention mechanisms and feed-forward layers. When deployed across thousands of accelerators running continuously for weeks, even rare transient hardware faults become statistically inevitable~\cite{sdc-at-scale}.

Soft errors, caused by cosmic rays or alpha particle strikes, manifest as bit-flips in computational logic that corrupt results without triggering hardware exceptions~\cite{soft-error-survey}. Unlike fail-stop errors that crash the system, these Silent Data Corruptions (SDCs) propagate through subsequent layers, potentially causing loss spikes, gradient explosions, or subtle accuracy degradation that wastes millions of dollars in compute resources~\cite{sdc-dl-impact}.

Algorithm-Based Fault Tolerance (ABFT)~\cite{huang-abraham} provides an elegant solution by encoding redundant checksums into matrix operands. For $C = A \times B$, ABFT verifies that row/column sums of $C$ match the products of encoded checksums, enabling error detection with minimal computational overhead. However, the key challenge lies in \textbf{threshold determination}: distinguishing legitimate floating-point rounding errors from actual soft errors.

Existing threshold methods fall into several categories, each with significant limitations:

\begin{itemize}
    \item \textbf{Experimental calibration}~\cite{exp-calibration}: Requires extensive offline testing and fails to generalize across data distributions.
    \item \textbf{Analytical bounds}~\cite{higham-accuracy}: Provides worst-case guarantees but yields thresholds $10^4$--$10^5\times$ larger than actual errors, missing most detectable faults.
    \item \textbf{Simplified Error Analysis (SEA)}~\cite{sea-abft}: Reduces computational overhead but still produces bounds $10^3$--$10^4\times$ actual errors.
    \item \textbf{A-ABFT}~\cite{a-abft}: Uses probabilistic modeling based on Benford's law for inner product error bounds, but yields loose thresholds when extended to matrix verification.
\end{itemize}

The emergence of mixed-precision training (BF16, FP16, FP8) exacerbates these challenges. Lower precision formats have larger unit roundoff errors ($u = 2^{-8}$ for BF16 vs. $u = 2^{-24}$ for FP32), making threshold determination significantly more difficult.

We propose \textbf{V-ABFT}, a variance-based adaptive threshold algorithm that addresses these limitations through three key contributions:

\begin{enumerate}
    \item \textbf{Tighter bounds via direct error modeling}: Instead of separately bounding checksum and row-sum errors, V-ABFT directly models the verification difference, reducing error accumulation and achieving bounds within $20\times$ of actual errors.

    \item \textbf{Distribution-agnostic design}: V-ABFT requires only bounded variance, not specific distributional forms, enabling robust operation across normal, uniform, truncated, and real model weight distributions.

    \item \textbf{Efficient variance estimation}: Using the relationship between extrema and variance, V-ABFT computes thresholds in $O(n)$ time using only max, min, and mean statistics.
\end{enumerate}

Extensive experiments demonstrate that V-ABFT achieves zero false positive rates across all tested distributions and precisions, while detecting 100\% of bit-flips in the top 5 exponent bits for BF16 matrices. The algorithm has been integrated into production fault-tolerant GEMM implementations on Ascend NPUs, incurring only 11.98\% average performance overhead.

\section{Background and Motivation}

\subsection{Soft Errors in Deep Learning Systems}

Soft errors arise when high-energy particles (cosmic rays, alpha particles) strike semiconductor circuits, generating electron-hole pairs that flip stored bits~\cite{soft-error-mechanism}. The critical charge threshold $Q_{crit}$ required to cause a flip has decreased dramatically with process scaling---from 28nm to 5nm, transistors become increasingly susceptible to lower-energy particles~\cite{scaling-vulnerability}.

In deep learning contexts, soft errors have heterogeneous impacts depending on the affected bit position within floating-point representations. Exponent bit-flips have catastrophic impact---a single flip in BF16's 8-bit exponent can change a value by factors of $2^{2^k}$, potentially causing overflow to INF/NaN that propagates through the entire model. Sign bit-flips have moderate impact, reversing gradient direction and disrupting training convergence. Mantissa bit-flips have minor impact, typically absorbed by model robustness as small noise perturbations.

Given this hierarchy, fault tolerance mechanisms should prioritize detecting exponent-level errors while tolerating small mantissa perturbations---a design principle central to V-ABFT's threshold formulation.

\subsection{ABFT for Matrix Multiplication}

For matrices $A \in \mathbb{R}^{M \times K}$ and $B \in \mathbb{R}^{K \times N}$ computing $C = A \times B$, ABFT encodes redundant checksums into the operands to enable error detection and correction. The encoding uses two types of checksum vectors:

\paragraph{Checksum encoding.} The row checksum encoding for matrix $B$ uses:
\begin{equation}
\mathbf{r}_1 = [1, 1, \ldots, 1]^T, \quad \mathbf{r}_2 = [1, 2, 3, \ldots, N]^T
\end{equation}
where $\mathbf{r}_1$ is the \textbf{all-ones vector} for error detection, and $\mathbf{r}_2$ is the \textbf{position-weighted vector} for error localization. Similarly, column checksums use $\mathbf{c}_1 = [1, 1, \ldots, 1]$ and $\mathbf{c}_2 = [1, 2, \ldots, M]$.

The encoded matrices are:
\begin{equation}
B^r = \begin{bmatrix} B & B\mathbf{r}_1 & B\mathbf{r}_2 \end{bmatrix}, \quad
A^c = \begin{bmatrix} A \\ \mathbf{c}_1 A \\ \mathbf{c}_2 A \end{bmatrix}
\end{equation}

Computing $C^f = A^c \cdot B^r$ yields:
\begin{equation}
\label{eq:abft-encoded}
C^f = \begin{bmatrix}
C & C^{r_1} & C^{r_2} \\
C^{c_1} & \cdot & \cdot \\
C^{c_2} & \cdot & \cdot
\end{bmatrix}
\end{equation}
where $C^{r_1} = AB\mathbf{r}_1$, $C^{r_2} = AB\mathbf{r}_2$ are row checksums, and $C^{c_1} = \mathbf{c}_1 AB$, $C^{c_2} = \mathbf{c}_2 AB$ are column checksums.

\paragraph{Error detection.} To verify correctness, the actual row sums of $C$ are computed:
\begin{equation}
C^{r_1\prime} = C \cdot \mathbf{r}_1 = \sum_{j=0}^{N-1} C_{:,j}, \quad
C^{r_2\prime} = C \cdot \mathbf{r}_2 = \sum_{j=0}^{N-1} (j+1) \cdot C_{:,j}
\end{equation}

An error is detected in row $i$ if:
\begin{equation}
\label{eq:abft-check}
\left| C^{r_1}[i] - C^{r_1\prime}[i] \right| > \text{Threshold}_i
\end{equation}

\paragraph{Error localization and correction.} We derive the localization and correction formulas under the single-event upset (SEU) model, where at most one error occurs per row within each detection cycle.

Let $C_{\text{ref}}$ denote the correct (fault-free) result. Define the element-wise error:
\begin{equation}
\delta_k := C[i][k] - C_{\text{ref}}[i][k], \quad 0 \leq k \leq N-1
\end{equation}
Under the SEU model, ideally $\delta_j \neq 0$ for exactly one position $j$, with $\delta_k = 0$ for all $k \neq j$. In practice, floating-point rounding introduces small nonzero $\delta_k$ at all positions, but soft errors produce $|\delta_j|$ significantly larger than rounding noise.

\textbf{Checksum differences.} The pre-computed checksum $C^{r_1}[i] = \sum_k C_{\text{ref}}[i][k]$ and post-computation checksum $C^{r_1\prime}[i] = \sum_k C[i][k]$ differ by the total error:
\begin{equation}
D_1 := C^{r_1\prime}[i] - C^{r_1}[i] = \sum_{k} \delta_k \approx \delta_j
\end{equation}
Similarly, for the position-weighted checksum with encoding $w(k)$:
\begin{equation}
D_2 := C^{r_2\prime}[i] - C^{r_2}[i] = \sum_{k} w(k) \cdot \delta_k \approx w(j) \cdot \delta_j
\end{equation}

\textbf{Localization.} The error position is recovered by taking the ratio:
\begin{equation}
\label{eq:error-localization}
\frac{D_2}{D_1} = \frac{\sum_{k} w(k) \cdot \delta_k}{\sum_{k} \delta_k} \approx w(j) \quad \Rightarrow \quad j = w^{-1}\left(\frac{D_2}{D_1}\right)
\end{equation}
For linear encoding $w(k) = k+1$, this simplifies to $j = D_2/D_1 - 1$.

\textbf{Correction.} The error magnitude $\Delta := D_1 \approx \delta_j$ directly gives the correction:
\begin{equation}
C[i][j] \leftarrow C[i][j] - \Delta \approx C_{\text{ref}}[i][j]
\end{equation}

This enables \textbf{online error correction} without recomputation. The approximations become exact when rounding errors are negligible compared to the soft error magnitude.

\paragraph{Verification difference and threshold.} Due to floating-point arithmetic's non-associativity, the two computation paths ($C^{r_1}$ vs. $C^{r_1\prime}$) yield different rounding errors even without faults. The \textbf{verification difference} is:
\begin{equation}
E_i = C^{r_1}[i] - C^{r_1\prime}[i]
\end{equation}

The threshold must be large enough to avoid false positives from rounding errors, yet small enough to detect actual soft errors. This fundamental tension motivates the need for tight, data-adaptive thresholds.

\section{V-ABFT Algorithm Design}

\subsection{Floating-Point Error Model}

We adopt a ``black-box'' model for floating-point accumulation that accounts for hardware-specific computation patterns. For a single operation:
\begin{equation}
fl(x \circ y) = (x \circ y)(1 + \delta u), \quad |\delta| \leq 1
\end{equation}
where $u$ is the unit roundoff and $\circ \in \{+, \times\}$.

For an accumulation sequence of effective depth $s$, the cumulative error factor is $(1 + \delta u)^s$. In practice, $s$ depends on hardware microarchitecture (e.g., tile-based accumulation, tree reduction) and may differ between the two computation paths in Eq.~\eqref{eq:abft-check}.

\subsection{Direct Verification Difference Modeling}

Unlike A-ABFT's approach of separately bounding errors, V-ABFT directly models the verification difference $E$:
\begin{equation}
E = \left| fl\left( \sum_n fl\left( \sum_k A_{mk} B_{kn} \right) \right) - fl\left( \sum_k A_{mk} fl\left( \sum_n B_{kn} \right) \right) \right|
\end{equation}

Expanding using the floating-point model with effective error factors $e_{kn}$:
\begin{equation}
E = \left| \sum_n \sum_k e_{kn} A_{mk} B_{kn} \right|
\end{equation}
where $e_{kn} = (1+\delta u)^{s_1} - (1+\delta u)^{s_2}$ captures the path difference.

\subsection{Statistical Expansion}

Rather than using worst-case triangle inequality bounds (which yield overly conservative thresholds), we employ statistical modeling. We decompose matrix elements using row-wise statistics:
\begin{align}
A_{mk} &= \mu_{Am} + \sigma_{Am} \cdot a_{mk}, \quad a_{mk} \sim F_a \\
B_{kn} &= \mu_{Bk} + \sigma_{Bk} \cdot b_{kn}, \quad b_{kn} \sim F_b
\end{align}
where $\mu_{Am}, \sigma_{Am}$ are the mean and standard deviation of the $m$-th row of $A$, $\mu_{Bk}, \sigma_{Bk}$ are those of the $k$-th row of $B$, and $F_a, F_b$ are unit-variance distributions.

Substituting into the verification error expression:
\begin{equation}
E = \left| \sum_n \sum_k e_{kn} (\mu_{Am} + \sigma_{Am} a_{mk})(\mu_{Bk} + \sigma_{Bk} b_{kn}) \right|
\end{equation}

We first sum over the $N$ columns of $B$. Since $\mu_{Bk}$ and $\sigma_{Bk}$ are constants for fixed $k$, and applying the Central Limit Theorem to the sum $\sum_n e_{kn} b_{kn}$, we define:
\begin{equation}
\alpha_k = \frac{\sum_n e_{kn}}{N}, \quad \beta_k = \sqrt{\frac{\sum_n e_{kn}^2}{N}}
\end{equation}

The column sum of $B$'s random component satisfies:
\begin{align}
\sum_n e_{kn} \mu_{Bk} &= N \alpha_k \mu_{Bk} \\
\sum_n e_{kn} \sigma_{Bk} b_{kn} &= \sqrt{N} \sigma_{Bk} \beta_k b'_k
\end{align}
where $b'_k \sim F'_b$ is a unit-variance random variable representing the aggregated fluctuation of $B$'s $k$-th row.

Substituting back and expanding the product of $(A + a)(B + b)$ terms:
\begin{align}
E = \Bigg| & \underbrace{N \mu_{Am} \sum_k \alpha_k \mu_{Bk}}_{\text{Term 1: Bias}} + \underbrace{\sqrt{N} \mu_{Am} \sum_k \sigma_{Bk} \beta_k b'_k}_{\text{Term 2: Random B}} \nonumber \\
& + \underbrace{N \sigma_{Am} \sum_k \alpha_k \mu_{Bk} a_{mk}}_{\text{Term 3: Random A}} + \underbrace{\sqrt{N} \sigma_{Am} \sum_k \sigma_{Bk} \beta_k a_{mk} b'_k}_{\text{Term 4: Interaction}} \Bigg|
\end{align}

\paragraph{Physical interpretation.} The four terms have distinct origins: Term 1 (Bias) is the deterministic component from mean values, proportional to $N$; Term 2 (Random B) captures fluctuations from $B$'s row variance, growing as $\sqrt{N}$; Term 3 (Random A) captures fluctuations from $A$'s row variance, growing as $\sqrt{K}$ after summing over $k$; and Term 4 (Interaction) represents second-order fluctuations when both $A$ and $B$ deviate from their means, which dominates when matrices are zero-mean.

\subsection{Threshold Formula Derivation}

To derive an engineering threshold, we apply the triangle inequality to separate deterministic and random components:
\begin{align}
E \leq & \left| \sum_k \alpha_k N \mu_{Am} \mu_{Bk} \right| \nonumber \\
& + \left| \sum_k \beta_k \sqrt{N} \mu_{Am} \sigma_{Bk} b'_k + \sum_k \alpha_k N \sigma_{Am} a_{mk} \mu_{Bk} \right| \nonumber \\
& + \left| \sum_k \beta_k \sqrt{N} \sigma_{Am} a_{mk} \sigma_{Bk} b'_k \right|
\end{align}

For the random terms, we use the variance addition property for independent random variables: $\text{Var}(\sum X_i) = \sum \text{Var}(X_i)$. Terms 2 and 3 are independent (one depends on $b'_k$, the other on $a_{mk}$), so their variances add. We apply a confidence multiplier $c_\sigma$ (corresponding to the desired coverage probability).

Introducing a uniform bound $e_{\max} \geq \max\{|\alpha_k|, |\beta_k|\}$ to factor out the error coefficients:

\begin{equation}
\boxed{
\begin{aligned}
T_{\text{bound}} &= e_{\max} \cdot \Bigg( \underbrace{N |\mu_{Am}| \sum_k |\mu_{Bk}|}_{\text{Deterministic}} \\
&+ c_\sigma \sqrt{\underbrace{N \mu_{Am}^2 \sum_k \sigma_{Bk}^2 + N^2 \sigma_{Am}^2 \sum_k \mu_{Bk}^2}_{\text{Variance of Terms 2 and 3}}} \\
&+ c_\sigma \underbrace{\sqrt{N} \sigma_{Am} \sqrt{\sum_k \sigma_{Bk}^2}}_{\text{Std of Term 4}} \Bigg)
\end{aligned}
}
\end{equation}

where $c_\sigma$ is a confidence coefficient (we use $c_\sigma = 2.5$ for approximately 99\% coverage under Gaussian assumptions; this can be adjusted based on desired false positive rate tolerance).

\subsection{Efficient Variance Estimation}

Computing full variance requires $O(n)$ squared differences per row. To reduce overhead, we use the \textbf{extrema-variance bound}.

\begin{theorem}[Extrema-Variance Bound]
For any sequence $x_1, x_2, \ldots, x_n$ with mean $\mu$, maximum $m = \max_i x_i$, and minimum $l = \min_i x_i$:
\begin{equation}
\sigma^2 \leq (m - \mu)(\mu - l)
\end{equation}
\end{theorem}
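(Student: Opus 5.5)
This is the Bhatia--Davis inequality, and I would prove it with a one-line pointwise argument followed by averaging. Here $\sigma^2$ is the population variance $\frac{1}{n}\sum_i (x_i-\mu)^2$, which is the convention the row statistics in the threshold formula use. The key observation is that every term satisfies $l \le x_i \le m$, so for each $i$ the product
\begin{equation*}
(m - x_i)(x_i - l) \geq 0 .
\end{equation*}
This is the only place the extrema enter.

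Next I would average this over $i=1,\dots,n$ and expand. The product equals $-x_i^2 + (m+l)x_i - ml$. Averaging and writing $\frac{1}{n}\sum_i x_i^2 = \sigma^2 + \mu^2$ gives
\begin{equation*}
0 \leq -(\sigma^2 + \mu^2) + (m+l)\mu - ml .
\end{equation*}
Rearranging yields $\sigma^2 \leq -\mu^2 + (m+l)\mu - ml$. The right-hand side factors as $(m-\mu)(\mu-l)$, which is exactly the claimed bound.

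There is no real obstacle; the only points needing care are the conventions. If the sample variance with divisor $n-1$ were intended, the bound could fail for small $n$. I would therefore state explicitly that $\sigma^2$ uses divisor $n$; since that is the smaller of the two variances, the bound then holds for it in particular. I would also note that equality holds exactly when every $x_i \in \{l, m\}$, since only then are all the pointwise products zero. This shows the bound is tight, and it justifies using $\sqrt{(m-\mu)(\mu-l)}$ as a conservative $O(n)$ surrogate for $\sigma$ in $T_{\text{bound}}$.

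Finally, since the bound on $\sigma^2$ is an upper bound, substituting it for $\sigma_{Am}$ and $\sigma_{Bk}$ in the boxed threshold can only increase $T_{\text{bound}}$. This preserves the zero-false-positive property at the cost of some tightness.
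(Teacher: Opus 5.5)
Your proof is correct, and it takes a different route from the paper's. The paper argues by convexity of $f(x)=(x-\mu)^2$, bounding $f$ on $[l,m]$ by its chord through $(l,f(l))$ and $(m,f(m))$. For a monic quadratic, the gap between the chord and the parabola is exactly $(m-x)(x-l)$, so the paper's key inequality is your pointwise inequality $(m-x_i)(x_i-l)\ge 0$ in another form.

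The real difference is in what comes next, and there your version is the sound one. The paper claims that simplifying the chord bound gives $(x-\mu)^2\le(m-\mu)(\mu-l)$ for \emph{all} $x\in[l,m]$. That is false unless $\mu=(m+l)/2$, because the chord equals $(m-\mu)(\mu-l)+(m+l-2\mu)(x-\mu)$. For the data $0,0,0,1$ and $x=1$, one gets $(1-\mu)^2=9/16$ against $(m-\mu)(\mu-l)=3/16$. The convexity argument only works if you first average the linear chord bound over the $x_i$, so that the linear term vanishes because $\frac1n\sum_i(x_i-\mu)=0$. That is exactly the order of operations in your argument: average first, then use $\frac1n\sum_i x_i^2=\sigma^2+\mu^2$.

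Your version also buys two smaller things:
\begin{itemize}
\item It never divides by $m-l$, so the constant-sequence case needs no separate treatment.
\item It gives the equality case $x_i\in\{l,m\}$ immediately.
\end{itemize}
What the (corrected) convexity route buys in exchange is generality. Averaging the chord bounds $\frac1n\sum_i f(x_i)$ for any convex $f$ in terms of only $l$, $m$, $\mu$, not just the second central moment.

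Your remark about the normalization is apt. The bound genuinely fails with divisor $n-1$ (e.g.\ the data $0,1$ give $1/2$ versus $1/4$). The paper implicitly uses divisor $n$ when it calls the variance the average of $(x_i-\mu)^2$.
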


\begin{proof}
Define $f(x) = (x - \mu)^2$. Since $f$ is convex, for any $x \in [l, m]$:
$$f(x) \leq \frac{m - x}{m - l} f(l) + \frac{x - l}{m - l} f(m)$$
Substituting and simplifying yields $(x - \mu)^2 \leq (m - \mu)(\mu - l)$ for all $x$. Since variance is the average of $(x_i - \mu)^2$, the bound follows.
\end{proof}

\paragraph{Properties.} This bound is tight when all values cluster at the extremes (e.g., half at $l$, half at $m$), while for well-spread distributions (e.g., Gaussian), it overestimates variance by a constant factor---a conservative property that is safe for threshold computation. Most importantly, it requires only max, min, and mean statistics, all computable in a single pass with $O(n)$ complexity.

\subsection{Determination of $e_{\max}$}

The effective error coefficient $e_{\max}$ captures the maximum relative rounding error from the two computation paths. We define $e_{\max}$ empirically as:
\begin{equation}
e_{\max} = \max \frac{|E|}{|\text{checksum}|}
\end{equation}
where $E$ is the actual verification difference and checksum $= A \cdot B^r$.

\paragraph{Why not use the theoretical bound as denominator?} An alternative definition would be $e_{\max} = |E| / T_{\text{bound}}$, using our theoretical bound (Eq.~5) as the denominator. However, we chose $|\text{checksum}|$ for robustness: the theoretical bound $T_{\text{bound}}$ relies on variance estimates using the extrema-variance inequality, which is loose for well-behaved matrices (yielding $e_{\max} \approx 0.1$--$0.2u$) but tight for adversarial distributions where data clusters at extremes. Using $|\text{checksum}|$ as denominator produces larger $e_{\max}$ values ($\approx 2u$), providing a safety margin that accommodates distribution variations. Additionally, the ratio $|E|/|\text{checksum}|$ directly measures the relative verification error, independent of our modeling assumptions.

\paragraph{Theoretical analysis.} Modern accelerators (NPUs, GPUs with Tensor Cores) typically employ a ``mixed-precision accumulation'' strategy for low-precision GEMM: inputs are in low precision (BF16/FP16/FP8), but internal multiply-accumulate operations use higher precision (FP32), with rounding only at the final output. For BF16/FP16, the entire accumulation occurs in FP32, so each computation path incurs only one rounding error at output, yielding $e_{\max} \approx 2u$ independent of matrix dimension $K$. For FP8 (E4M3/E5M2), despite the extremely low input precision, the computation follows the same pattern (FP8 inputs $\to$ FP32 accumulation $\to$ FP16 output), so $e_{\max}$ is determined by the output precision (FP16), not the input precision---our measurements confirm $e_{\max} \approx 2u_{\text{FP16}} \approx 10^{-3}$ for both formats. For FP32, accumulation occurs in FP32 throughout, incurring rounding at each addition, so the cumulative error grows with accumulation depth, yielding $e_{\max} \propto O(\sqrt{K})$ or $O(\log K)$ depending on the reduction tree structure.

\paragraph{Unified view for mixed-precision verification.} This analysis reveals a key insight: \textbf{for V-ABFT threshold computation, what matters is the accumulation and output precision, not the input precision}. Consequently, FP8 training (increasingly popular for LLMs) can reuse the same $e_{\max}$ as FP16 since both use FP32 accumulation with FP16 output; when verifying mixed-precision GEMM (e.g., FP8$\times$FP8$\to$FP16), the threshold should be computed based on the output precision; and a single $e_{\max}$ value covers multiple input precisions sharing the same accumulation strategy, reducing calibration burden.

\paragraph{Verification output precision: Offline vs. Online ABFT.} A critical design choice is \emph{when} to perform verification relative to the output quantization step, which dramatically affects detection sensitivity. In \textbf{Offline ABFT}, verification is performed after the GEMM result is written back to memory in low precision (FP16/BF16), so the checksum $Ce$ inherits the low-precision rounding error with $e_{\max} \approx 2u_{\text{output}} \approx 10^{-3}$ for FP16. In contrast, \textbf{Online (Fused Kernel) ABFT} performs verification before quantization while the result is still in the FP32 accumulator, achieving $e_{\max} \approx 10^{-6}$---a \textbf{1000$\times$ finer detection granularity}.

\paragraph{Practical recommendations.} For offline ABFT (post-hoc verification), use $e_{\max} \approx 2u_{\text{output}}$, which is suitable for debugging or spot-checking. For fused-kernel ABFT (inline verification), use $e_{\max} \approx 10^{-6}$ (FP32 level), which requires kernel modification to access FP32 accumulators before quantization but achieves $\sim$1000$\times$ finer detection granularity.

\paragraph{Empirical validation.} We validated $e_{\max}$ scaling behavior across multiple platforms with extensive measurements.

\subparagraph{NPU (Ascend 910B).} Table~\ref{tab:emax-scaling-npu} summarizes measured $e_{\max}$ on NPU across matrix sizes $N \in \{128, 256, \ldots, 8192\}$ with 2000 trials per configuration:

\begin{table}[tb]
\centering
\caption{Measured $e_{\max}$ scaling behavior on Ascend 910B NPU}
\label{tab:emax-scaling-npu}
\vspace{-2mm}
\resizebox{0.48\textwidth}{!}{
\begin{tabular}{|c|c|c|c|c|}
\hline
Precision & $u$ & $e_{\max}$ (recommended) & $e_{\max}/u$ & Scales with $N$? \\
\hline
BF16 & $2^{-8}$ & $8 \times 10^{-3}$ & $\sim 2.0$ & No \\
FP16 & $2^{-11}$ & $1 \times 10^{-3}$ & $\sim 2.0$ & No \\
FP32 & $2^{-24}$ & $2 \times 10^{-6} \sqrt{N/1024}$ & $\sim 34\sqrt{N/1024}$ & Yes ($\propto \sqrt{N}$) \\
\hline
\end{tabular}
}
\end{table}

\subparagraph{CPU and GPU (H100).} We measured $e_{\max}$ for $N \times N$ square matrices on CPU (Intel Xeon) and GPU (NVIDIA H100) with 500--2000 trials per size. Table~\ref{tab:emax-scaling-cpu-gpu} summarizes the results:

\begin{table}[tb]
\centering
\caption{Measured $e_{\max}$ scaling behavior on CPU and GPU (H100)}
\label{tab:emax-scaling-cpu-gpu}
\vspace{-2mm}
\resizebox{0.48\textwidth}{!}{
\begin{tabular}{|c|c|c|c|c|c|}
\hline
Platform & Precision & $e_{\max}/u$ range & CV & $R^2$ ($\sqrt{N}$) & Scaling \\
\hline
CPU & FP64 & 3.6--4.8 & 11.7\% & 0.84 & $\approx$ constant \\
CPU & FP32 & 5.0--6.1 & 8.4\% & 0.70 & $\approx$ constant \\
GPU & FP64 & 2.7--7.1 & 34.0\% & 0.88 & $\propto \sqrt{N}$ \\
GPU & FP32 & 2.6--6.0 & 34.0\% & 0.94 & $\propto \sqrt{N}$ \\
GPU & BF16 & $\sim$2.0 & 5.1\% & -- & constant \\
GPU & FP16 & $\sim$2.0 & 4.8\% & -- & constant \\
GPU & FP8 (E4M3/E5M2) & $\sim$2.0$^*$ & 6.5\% & -- & constant \\
\hline
\multicolumn{6}{l}{\small $^*$Relative to $u_{\text{FP16}}$, not $u_{\text{FP8}}$; see text.}
\end{tabular}
}
\end{table}

Key observations reveal platform-specific behavior. On CPU, $e_{\max}$ is approximately constant (CV $< 12\%$) around $4$--$6u$, with recommended values $e_{\max}^{\text{FP64}} = 6 \times 10^{-16}$ and $e_{\max}^{\text{FP32}} = 4 \times 10^{-7}$. On GPU for high precision (FP32/FP64), $e_{\max}$ grows with $\sqrt{N}$ (CV $\approx 34\%$, $R^2 > 0.88$), with fitted formulas $e_{\max}^{\text{FP64}} = 8.3 \times 10^{-18} \sqrt{N} + 2.1 \times 10^{-16}$ and $e_{\max}^{\text{FP32}} = 4.1 \times 10^{-9} \sqrt{N} + 9.4 \times 10^{-8}$. For GPU low precision (BF16/FP16/FP8), $e_{\max} \approx 2u_{\text{output}}$ (constant) due to FP32 internal accumulation, with FP8's effective $e_{\max} \approx 1 \times 10^{-3}$ equaling the FP16 value.

The different scaling behaviors reflect underlying hardware differences: CPUs with FMA instructions achieve near-optimal rounding, while GPU tensor cores use different accumulation strategies. Notably, all low-precision formats (BF16/FP16/FP8) share the same $e_{\max}$ behavior when they use FP32 accumulation with FP16 output.

\paragraph{Practical calibration protocol.} For deployment on new hardware, we recommend a one-time calibration:
\begin{enumerate}
    \item Generate positive matrices with $|{\mathcal{N}(1, 1)}|$ elements (positive to avoid cancellation in denominators)
    \item Compute relative verification error across 100k trials at representative sizes
    \item Set $e_{\max}$ to the observed maximum with 20\% safety margin
\end{enumerate}

Reference values: GPU (H100) BF16/FP16/FP8: $1 \times 10^{-3}$; Ascend 910B BF16: $8 \times 10^{-3}$, FP16: $1 \times 10^{-3}$, FP32: $2 \times 10^{-6} \sqrt{K/1024}$.

\section{Theoretical Comparison with A-ABFT}

\subsection{A-ABFT Methodology Review}

A-ABFT~\cite{a-abft} provides probabilistic error bounds for \textbf{inner product} operations. For computing $s = \sum_{k=1}^{n} a_k \cdot b_k$, the rounding error standard deviation is bounded by:
\begin{equation}
\sigma(\Delta s_n) \leq \sqrt{\frac{n(n+1)(n+0.5)+2n}{24}} \cdot 2^{-t} \cdot y
\label{eq:aabft-inner}
\end{equation}
where $n$ is the inner product length, $t$ is the mantissa bits (53 for FP64, 23 for FP32), and $y = \max|a_k \cdot b_k|$ is the maximum product magnitude.

The threshold for detecting errors is set to $3\sigma$ (approximately 99.7\% confidence under normal assumptions).

\paragraph{A-ABFT for matrix verification.} A-ABFT uses partitioned encoding~\cite{partitioned-encoding}, dividing matrices into sub-blocks and computing checksums per block. The parameter $y$ is determined by finding the $p$ largest absolute values in each row/column. For $n \times n$ matrices with elements in $[-1, 1]$, the effective $y \approx 21$ based on empirical calibration from the original paper's experiments on NVIDIA K20C GPU.

\subsection{Limitations of A-ABFT}

While A-ABFT represents a significant advance in autonomous threshold determination, it has several limitations when applied to matrix verification:

\paragraph{1. Designed for inner products, not matrix verification.} A-ABFT provides error bounds for single inner product operations. Extending it to ABFT checksum verification requires bounding checksum errors and row-sum errors independently, then combining them conservatively. This ignores potential error cancellation between computation paths, inflating the final threshold unnecessarily.

\paragraph{2. Conservative error accumulation.} A-ABFT's threshold grows with $\sqrt{n(n+1)(n+0.5)+2n} \sim O(n^{1.5})$, leading to increasingly loose bounds for larger matrices.

\paragraph{3. Worst-case element bounds.} The parameter $y = \max|a_k \cdot b_k|$ uses worst-case element magnitudes as the scaling factor, which becomes increasingly loose for large matrices with varied element magnitudes.

\paragraph{4. Distribution-dependent $y$ estimation.} Computing $y$ requires finding the $p$ largest absolute values, adding $O(pn)$ complexity, and the optimal $p$ depends on data distribution characteristics.

\subsection{Bound Tightness Comparison}

\paragraph{Verification difference level.} Table~\ref{tab:comparison} summarizes the key differences between A-ABFT and V-ABFT.

\begin{table}[tb]
\centering
\caption{Comparison of V-ABFT and A-ABFT for Verification}
\label{tab:comparison}
\vspace{-2mm}
\resizebox{0.48\textwidth}{!}{
\begin{tabular}{|l|c|c|}
\hline
\textbf{Aspect} & \textbf{A-ABFT} & \textbf{V-ABFT} \\
\hline
Error modeling & Per-operation bounds & Direct verification diff. \\
\hline
Distribution assumption & Benford's law (mantissa) & Bounded variance only \\
\hline
Bound tightness (FP64) & $160$--$375 \times$ actual & $18$--$31 \times$ actual \\
\hline
Bound tightness (FP32) & $323$--$751 \times$ actual & $9$--$15 \times$ actual \\
\hline
Complexity & $O(pn)$ for $p$ largest values & $O(n)$ for max/min/mean \\
\hline
Precision support & Primarily FP64 & BF16/FP16/FP32/FP64 \\
\hline
\end{tabular}
}
\end{table}

V-ABFT achieves $11$--$23\times$ tighter bounds for FP64, $16$--$48\times$ for FP32, and $6$--$27\times$ for BF16 compared to A-ABFT, while maintaining zero false positive rate.

\subsection{Complexity Analysis}

For a matrix with $K$ elements per row, A-ABFT requires finding the $p$ largest absolute values, costing $O(pK)$ or $O(K \log p)$ with heap-based selection, while V-ABFT requires only one pass for max, min, and sum, costing $O(K)$.

\section{Implementation}

\subsection{Algorithm Pseudocode}

\begin{algorithm}[tb]
\caption{V-ABFT Threshold Computation}
\label{alg:vabft}
\begin{algorithmic}[1]
\REQUIRE Row $m$ of matrix $A$, matrix $B$, precision parameters $e_{\max}$, $c_\sigma$
\ENSURE Threshold $T_m$ for row $m$

\STATE $\mu_A \leftarrow \text{mean}(A[m,:])$
\STATE $\sigma_A^2 \leftarrow (\max(A[m,:]) - \mu_A)(\mu_A - \min(A[m,:]))$

\FOR{$k = 0$ to $K-1$}
    \STATE $\mu_{Bk} \leftarrow \text{mean}(B[k,:])$
    \STATE $\sigma_{Bk}^2 \leftarrow (\max(B[k,:]) - \mu_{Bk})(\mu_{Bk} - \min(B[k,:]))$
\ENDFOR

\STATE $T_{\text{det}} \leftarrow N \cdot |\mu_A| \cdot \sum_k |\mu_{Bk}|$
\STATE $T_{\text{var23}} \leftarrow c_\sigma \sqrt{N \sum_k \mu_A^2 \sigma_{Bk}^2 + N^2 \sigma_A^2 \sum_k \mu_{Bk}^2}$
\STATE $T_{\text{var4}} \leftarrow c_\sigma \sqrt{N} \cdot \sigma_A \cdot \sqrt{\sum_k \sigma_{Bk}^2}$

\RETURN $e_{\max} \cdot (T_{\text{det}} + T_{\text{var23}} + T_{\text{var4}})$
\end{algorithmic}
\end{algorithm}

\subsection{Integration with Block-wise ABFT}

For large matrices, block-wise ABFT partitions computation into tiles, reducing per-block rounding errors. V-ABFT integrates naturally with this approach:

\begin{enumerate}
    \item Compute row statistics (max, min, mean) for each block of $A$ and $B$
    \item Apply V-ABFT threshold formula per block
    \item Aggregate block checksums for final verification
\end{enumerate}

On Ascend 910B with typical tile sizes $(M, K, N) = (128, 1024, 256)$, this achieves reliable detection while keeping overhead within the GEMM pipeline's slack.

\section{Evaluation}

\subsection{Experimental Setup}

\paragraph{Hardware.} We evaluate V-ABFT on three platforms:
\begin{itemize}
    \item \textbf{NPU}: Ascend 910B with CANN 8.2
    \item \textbf{GPU}: NVIDIA H100 with CUDA 12.0
    \item \textbf{CPU}: Intel Xeon (H100 server) with PyTorch 2.1
\end{itemize}

\paragraph{Distributions tested.}
\begin{itemize}
    \item $\mathcal{N}(10^{-6}, 1)$: Near-zero mean, common in normalized activations
    \item $\mathcal{N}(1, 1)$: Non-zero mean, stress test for A-ABFT
    \item $\mathcal{U}(-1, 1)$: Uniform distribution
    \item Truncated $\mathcal{N}(0, 1)$ in $[-1, 1]$: Bounded distributions
\end{itemize}

\paragraph{Error injection.} Single bit-flips in exponent positions (bits 7--15 for BF16), both $0 \to 1$ and $1 \to 0$ directions.

\paragraph{Metrics.} False Positive Rate (FPR): incorrect fault alarms on clean data. Detection Rate (DR): correct identification of injected errors. Tightness: threshold divided by actual verification error (lower is better).

\subsection{Threshold Tightness on CPU and GPU}

We compare V-ABFT and A-ABFT threshold tightness on CPU platforms using PyTorch, following the experimental setup of A-ABFT~\cite{a-abft}: matrices are drawn from $\mathcal{U}(-1,1)$ uniform distribution. For FP64 testing, we use the mpmath library (100 decimal places precision) as baseline to accurately measure the true verification difference; for FP32, we use FP64 as the baseline precision.

\paragraph{Reproduction of A-ABFT.} We faithfully reproduce the A-ABFT algorithm using the formula from~\cite{a-abft}: $\sigma(\Delta s_n) \leq \sqrt{(n(n+1)(n+0.5)+2n)/24} \cdot 2^{-t} \cdot y$, with $y=21$ (the empirical value from the original paper corresponding to block size $\approx$150 partitioned encoding) and threshold $= 3\sigma$. Our reproduction closely matches the original A-ABFT paper's Table II results: at $512 \times 512$ FP64, our A-ABFT threshold is $0.99\times$ the paper's value ($1.66 \times 10^{-11}$ vs. $1.68 \times 10^{-11}$), and similar ratios ($0.91$--$0.99\times$) hold across all tested sizes ($512$--$2048$). This validates the correctness of our comparison methodology.

Table~\ref{tab:tightness-fp64} shows FP64 results with mpmath high-precision baseline.

\begin{table}[tb]
\centering
\caption{Threshold Tightness (FP64, $\mathcal{U}(-1,1)$, mpmath baseline, 20 trials)}
\label{tab:tightness-fp64}
\vspace{-2mm}
\resizebox{0.48\textwidth}{!}{
\begin{tabular}{|c|c|c|c|c|c|}
\hline
\textbf{Size} & \textbf{Actual Diff} & \textbf{A-ABFT} & \textbf{V-ABFT} & \textbf{A-Tight} & \textbf{V-Tight} \\
\hline
$128 \times 128$ & 1.27e-14 & 2.08e-12 & 1.87e-13 & 164$\times$ & 15$\times$ \\
$256 \times 256$ & 3.64e-14 & 5.87e-12 & 4.34e-13 & 161$\times$ & 12$\times$ \\
$512 \times 512$ & 1.04e-13 & 1.66e-11 & 1.01e-12 & 160$\times$ & 10$\times$ \\
$1024 \times 1024$ & 2.95e-13 & 4.68e-11 & 2.41e-12 & 159$\times$ & 8$\times$ \\
$2048 \times 2048$ & 8.22e-13 & 1.32e-10 & 5.96e-12 & 161$\times$ & 7$\times$ \\
\hline
\end{tabular}
}
\end{table}

Table~\ref{tab:tightness-fp32} shows FP32 results with FP64 baseline.

\begin{table}[tb]
\centering
\caption{Threshold Tightness (FP32, $\mathcal{U}(-1,1)$, FP64 baseline, 100 trials)}
\label{tab:tightness-fp32}
\vspace{-2mm}
\resizebox{0.48\textwidth}{!}{
\begin{tabular}{|c|c|c|c|c|c|}
\hline
\textbf{Size} & \textbf{Actual Diff} & \textbf{A-ABFT} & \textbf{V-ABFT} & \textbf{A-Tight} & \textbf{V-Tight} \\
\hline
$128 \times 128$ & 6.96e-6 & 2.23e-3 & 9.19e-5 & 321$\times$ & 13$\times$ \\
$256 \times 256$ & 1.02e-5 & 6.30e-3 & 2.09e-4 & 616$\times$ & 20$\times$ \\
$512 \times 512$ & 2.81e-5 & 1.78e-2 & 4.93e-4 & 633$\times$ & 18$\times$ \\
$1024 \times 1024$ & 1.57e-4 & 5.03e-2 & 1.19e-3 & 321$\times$ & 8$\times$ \\
$2048 \times 2048$ & 4.41e-4 & 1.42e-1 & 2.94e-3 & 322$\times$ & 7$\times$ \\
\hline
\end{tabular}
}
\end{table}

Table~\ref{tab:tightness-bf16} shows BF16 results on GPU with computed $y = \max|A| \times \max|\sum_j B_{kj}|$.

\begin{table}[tb]
\centering
\caption{Threshold Tightness (BF16, $\mathcal{U}(0,1)$, GPU H100, 100 trials)}
\label{tab:tightness-bf16}
\vspace{-2mm}
\resizebox{0.48\textwidth}{!}{
\begin{tabular}{|c|c|c|c|c|c|}
\hline
\textbf{Size} & \textbf{Actual Diff} & \textbf{A-ABFT} & \textbf{V-ABFT} & \textbf{A-Tight} & \textbf{V-Tight} \\
\hline
$128 \times 128$ & 8.41e-1 & 2.53e+2 & 4.05e+1 & 300$\times$ & 48$\times$ \\
$256 \times 256$ & 2.27e+0 & 1.38e+3 & 1.53e+2 & 611$\times$ & 67$\times$ \\
$512 \times 512$ & 6.24e+0 & 7.66e+3 & 5.85e+2 & 1229$\times$ & 94$\times$ \\
$1024 \times 1024$ & 1.83e+1 & 4.25e+4 & 2.27e+3 & 2324$\times$ & 124$\times$ \\
$2048 \times 2048$ & 5.59e+1 & 2.37e+5 & 8.86e+3 & 4233$\times$ & 158$\times$ \\
\hline
\end{tabular}
}
\end{table}

Key observations demonstrate V-ABFT's consistent advantage: for FP64, V-ABFT achieves $7$--$15\times$ tightness compared to A-ABFT's $159$--$164\times$, representing a 11--23$\times$ improvement; for FP32, V-ABFT achieves $7$--$20\times$ tightness compared to A-ABFT's $321$--$633\times$, representing a 16--48$\times$ improvement; for BF16, V-ABFT achieves $48$--$158\times$ tightness compared to A-ABFT's $300$--$4233\times$, representing a 6--27$\times$ improvement. Notably, A-ABFT's tightness degrades with matrix size due to its $O(n^{1.5})$ variance coefficient, while V-ABFT remains stable. Both methods achieve 0\% false positive rate across all configurations when A-ABFT uses computed $y$ values.

\subsection{$e_{\max}$ Calibration Results}

Table~\ref{tab:emax-platforms} shows measured $e_{\max}$ values across platforms, validating our theoretical analysis of hardware-dependent error accumulation.

\begin{table}[tb]
\centering
\caption{Recommended $e_{\max}$ across platforms}
\label{tab:emax-platforms}
\vspace{-2mm}
\resizebox{0.48\textwidth}{!}{
\begin{tabular}{|c|c|c|c|c|}
\hline
\textbf{Platform} & \textbf{Precision} & \textbf{$e_{\max}$} & \textbf{$e_{\max}/u$} & \textbf{N-dependence} \\
\hline
CPU (Xeon) & FP64 & $6 \times 10^{-16}$ & $\sim 5$ & Constant \\
CPU (Xeon) & FP32 & $4 \times 10^{-7}$ & $\sim 6$ & Constant \\
\hline
GPU (H100) & FP64 & $1.0 \times 10^{-17}\sqrt{N} + 2.5 \times 10^{-16}$ & $3$--$6$ & $\propto \sqrt{N}$ \\
GPU (H100) & FP32 & $5.0 \times 10^{-9}\sqrt{N} + 1.2 \times 10^{-7}$ & $3$--$6$ & $\propto \sqrt{N}$ \\
GPU (H100) & BF16 & $8 \times 10^{-3}$ & $\sim 2$ & Constant \\
GPU (H100) & FP16 & $1 \times 10^{-3}$ & $\sim 2$ & Constant \\
\hline
NPU (910B) & BF16 & $8 \times 10^{-3}$ & $\sim 2$ & Constant \\
NPU (910B) & FP16 & $1 \times 10^{-3}$ & $\sim 2$ & Constant \\
NPU (910B) & FP32 & $2 \times 10^{-6} \sqrt{N/1024}$ & $\sim 34\sqrt{N/1024}$ & $\propto \sqrt{N}$ \\
\hline
\end{tabular}
}
\end{table}

Platform-specific observations explain these variations: on CPU, $e_{\max}$ is constant ($\sim 5$--$6u$) due to FMA instructions providing optimal rounding; on GPU for FP32/FP64, $e_{\max}$ grows with $\sqrt{N}$ (R$^2 > 0.88$) likely due to tensor core accumulation strategies; for GPU BF16/FP16, $e_{\max} \approx 2u$ is constant because GPU uses FP32 internal accumulation for low-precision operations; similarly, NPU BF16/FP16 shows constant $e_{\max} \approx 2u$ because internal accumulation uses FP32 with only output rounding; while NPU FP32 exhibits $e_{\max} \propto \sqrt{N}$ due to per-step rounding in accumulation.

\subsection{False Positive Rate}

V-ABFT achieves zero false positive rate across all tested distributions ($\mathcal{N}(10^{-6}, 1)$, $\mathcal{N}(1, 1)$, $\mathcal{U}(-1, 1)$, truncated $\mathcal{N}$) and all three precisions (BF16, FP16, FP32) over 100k trials per configuration. Both V-ABFT and A-ABFT maintain 0\% FPR when properly implemented, as the thresholds are designed to bound the maximum expected rounding error.

\subsection{Detection Rate}

\begin{table}[tb]
\centering
\caption{V-ABFT Detection Rate (\%) for BF16, Matrix Size (128, 1024, 256)}
\label{tab:detection-bf16}
\vspace{-2mm}
\resizebox{0.48\textwidth}{!}{
\begin{tabular}{|c|c|c|c|c|}
\hline
Bit & $\mathcal{N}(10^{-6}, 1)$ & $\mathcal{N}(1, 1)$ & $\mathcal{U}(-1, 1)$ & Truncated $\mathcal{N}$ \\
\hline
7 (exp LSB) & 0.01 & 0.00 & 19.66 & 10.90 \\
8 & 36.70 & 69.55 & 46.85 & 36.49 \\
9 & 73.48 & 100.00 & 75.03 & 99.38 \\
10 & 99.99 & -- & 99.86 & 99.96 \\
11 & 100.00 & 100.00 & 100.00 & 100.00 \\
12 & 100.00 & 100.00 & 100.00 & 100.00 \\
13 & 100.00 & 100.00 & 100.00 & 100.00 \\
14 & 100.00 & -- & 100.00 & 100.00 \\
\hline
\end{tabular}
}
\vspace{-2mm}
\end{table}

Table~\ref{tab:detection-bf16} shows V-ABFT achieves 100\% detection for bit positions 11--14 (the most impactful exponent bits) and $>$99\% for bit 10. Lower bits (7--9) have reduced detection rates as expected---these cause smaller magnitude changes that may fall within the rounding error margin.

\subsection{Scalability with Matrix Size}

\begin{table}[tb]
\centering
\caption{V-ABFT Detection Rate (\%) at Different Scales (BF16)}
\label{tab:scalability}
\vspace{-2mm}
\resizebox{0.48\textwidth}{!}{
\begin{tabular}{|c|c|c|c|c|}
\hline
\multirow{2}{*}{Bit} & \multicolumn{2}{c|}{(128, 4096, 256)} & \multicolumn{2}{c|}{(4096, 4096, 4096)} \\
\cline{2-5}
& $\mathcal{N}(10^{-6},1)$ & Trunc. $\mathcal{N}$ & $\mathcal{N}(10^{-6},1)$ & Trunc. $\mathcal{N}$ \\
\hline
9 & 39.86 & 97.46 & 0.00 & 67.54 \\
10 & 99.98 & 99.99 & 96.41 & 100.00 \\
11 & 100.00 & 100.00 & 100.00 & 100.00 \\
\hline
\end{tabular}
}
\end{table}

As matrix size increases, accumulated rounding errors grow, reducing detection sensitivity for lower bits. However, V-ABFT maintains 100\% detection for the most critical higher exponent bits even at $4096^3$ scale.

\subsection{Real Model Data}

We validated V-ABFT on actual model weights. For LLaMA-7B, we tested 111 weight matrices with 0\% FPR using V-ABFT thresholds. For GPT-2, we performed 5,379 GEMM verifications with 0\% FPR. For ViT-B/32, we conducted 50 epochs of fine-tuning with 5,937 sampled verifications, all achieving 0\% FPR.

\subsection{Performance Overhead}

When integrated into the FTAN-GEMM framework on Ascend 910B, V-ABFT threshold computation adds $<$2\% additional overhead, with total fault-tolerant GEMM overhead averaging 11.98\%. For comparison, DMR (Double Modular Redundancy) incurs $>$200\% overhead.

\section{Related Work}

\paragraph{ABFT origins.} Huang and Abraham~\cite{huang-abraham} introduced ABFT for matrix operations, enabling error detection through checksum encoding. Subsequent work extended this to parallel systems~\cite{abft-parallel} and fail-stop failures~\cite{abft-fail-stop}.

\paragraph{Threshold determination.} Early approaches used experimental calibration~\cite{exp-calibration} or conservative analytical bounds~\cite{higham-accuracy}. SEA-ABFT~\cite{sea-abft} simplified error analysis but remained loose. A-ABFT~\cite{a-abft} introduced probabilistic bounds but with restrictive assumptions.

\paragraph{ABFT for deep learning.} Recent work has adapted ABFT for GPU-accelerated training~\cite{ft-gemm-gpu}, Transformer attention~\cite{attnchecker}, and CNN convolutions~\cite{ft-cnn}. However, these focus on system integration rather than threshold methodology.

\paragraph{Mixed-precision challenges.} The shift to BF16/FP16 training~\cite{mixed-precision-training} has exposed limitations of FP64-era threshold methods, motivating our work.

\section{Conclusion}

We presented V-ABFT, a variance-based adaptive threshold algorithm for fault-tolerant matrix multiplication. By directly modeling verification differences through statistical variance estimation, V-ABFT achieves bounds within $7$--$20\times$ of actual rounding errors for FP32/FP64 and $48$--$158\times$ for BF16---a $6$--$48\times$ improvement over A-ABFT---while eliminating false positives across diverse distributions.

A key insight is that verification error depends on \emph{output precision}, not input precision. For fused-kernel ABFT that verifies before output quantization, low-precision GEMM can achieve FP32-level thresholds ($e_{\max} \approx 10^{-6}$), representing $\sim$1000$\times$ finer detection granularity compared to offline verification with low-precision output ($e_{\max} \approx 10^{-3}$). This finding enables detecting significantly smaller errors in mixed-precision training.

The algorithm's $O(n)$ complexity and platform-agnostic design make it practical for production deployment in mixed-precision deep learning systems.

Future work includes developing theoretical guarantees for the extrema-variance bound under specific distribution families, and integrating with emerging accelerator architectures.


\bibliographystyle{ACM-Reference-Format}

\end{document}